\newtheorem{theorem}{Theorem}
\newtheorem{prop}{Proposition}
\DeclareMathOperator*{\argmax}{arg\,max}
\newcolumntype{C}[1]{>{\centering\arraybackslash}p{#1}}
\setlist[enumerate]{wide=0pt, leftmargin=16pt, labelwidth=10pt, align=left}
\numberwithin{equation}{section}
\newcommand\lk[1]{{\color{magenta}{#1}}}
\icmltitlerunning{Monotone Optimisation with Learned Projections}
\begin{document}
\twocolumn[
\icmltitle{Monotone Optimisation with Learned Projections}
\icmlsetsymbol{equal}{*}

\begin{icmlauthorlist}
    \icmlauthor{Ahmed Rashwan}{ar}
    \icmlauthor{Keith Briggs}{kb}
    \icmlauthor{Chris Budd}{ar}
    \icmlauthor{Lisa Kreusser}{lk}
\end{icmlauthorlist}

\icmlaffiliation{ar}{University of Bath}
\icmlaffiliation{kb}{BT Research}
\icmlaffiliation{lk}{Monumo}

\icmlcorrespondingauthor{Ahmed Rashwan}{ar3009@bath.ac.uk}
\vskip 0.3in

]
\printAffiliationsAndNotice{}

\begin{abstract}
Monotone optimisation problems admit specialised global solvers such as the \emph{Polyblock Outer Approximation} (POA) algorithm, but these methods typically require explicit objective and constraint functions. In many applications, these functions are only available through data, making POA difficult to apply directly. We introduce an algorithm-aware learning approach that integrates learned models into POA by directly predicting its projection primitive via the \emph{radial inverse}, avoiding the costly bisection procedure used in standard POA. We propose Homogeneous–Monotone Radial Inverse (HM-RI) networks, structured neural architectures that enforce key monotonicity and homogeneity properties, enabling fast projection estimation. We provide a theoretical characterisation of radial inverse functions and show that, under mild structural conditions, a HM-RI predictor corresponds to the radial inverse of a valid set of monotone constraints. To reduce training overhead, we further develop relaxed monotonicity conditions that remain compatible with POA. Across multiple monotone optimisation benchmarks (indefinite quadratic programming, multiplicative programming, and transmit power optimisation), our approach yields substantial speed-ups in comparison to direct function estimation while maintaining strong solution quality, outperforming baselines that do not exploit monotonic structure.
\end{abstract}

\section{Introduction}

Many optimisation problems in modern machine learning pipelines involve objectives or constraints that are not analytically available and must instead be inferred from data \parencite{optim_learned_constraints}. Prominent examples include learned surrogate objectives in model-free optimisation \parencite{learned_objective_optim, learned_optimisation_bs_sleep} and regularisers learned through bilevel training for inverse problems \parencite{learned_regularisation_survey, depp_eq_regulariser}. In practice, such problems are most often addressed using local search methods \parencite{grad-based_learned_optim, learned_optim_sampling}. While these approaches scale well, they provide no guarantees of global optimality and are highly sensitive to the non-convexity introduced by learned components. At the opposite extreme, optimisation problems involving neural networks can often be cast as mixed-integer linear programs (MILPs) \parencite{Feedforward_MILP_optim}, enabling exact global optimisation but at prohibitive computational cost and with poor scaling in network size. These two extremes highlight a central challenge: how to integrate learned components into optimisation algorithms in a principled and reliable manner, particularly when the optimiser relies on structural assumptions that generic learned models do not satisfy.

A particularly powerful structural assumption is monotonicity: a function is monotone if it is non-decreasing (or non-increasing) in each input. Monotonicity arises naturally in applications such as resource allocation, communications \parencite{monotone_opt_comms_book}, and computer simulations \parencite{computer_simulation}, and it enables specialised global optimisation methods for otherwise difficult non-convex problems \parencite{low-rank_nonconvex_book, monotone_opt_tuy}. Monotone optimisation algorithms have been especially successful in communication network design \parencite{montone_opt_MISO, montone_opt_beamform}, yet they remain underexplored in learning-based pipelines, where objectives and constraints are only available through data.

The primary method for global monotone optimisation is the \emph{Polyblock Outer Approximation} (POA) algorithm \parencite{monotonic_opt_tuy2}. POA is a branch-and-bound–style procedure that exploits monotonicity to iteratively refine a rectangular approximation of the feasible region, systematically converging to the global optimum. POA is attractive because it is broadly applicable and relies on a small number of highly structured internal operations, most notably a repeated monotone projection step. Crucially, in many practical problems monotonicity is known from domain knowledge even when the underlying functions are unknown or expensive to evaluate, making POA a natural candidate for data-driven global optimisation. This suggests an algorithm-aware learning approach \parencite{learned_proximal_operator}, in which learned components are designed to interface directly with POA’s core operations while preserving the assumptions required for correctness.

We observe that POA does not require explicit access to the monotone constraint functions themselves. Instead, it only uses a projection subroutine, typically implemented via a bisection-based feasibility search. We make this dependency explicit by identifying the associated \emph{radial inverse} as the fundamental object required by POA, and show that estimating this map is sufficient to run the algorithm effectively. This shifts learning away from approximating constraint functions directly and toward approximating the internal primitive that POA actually uses, enabling tighter algorithmic integration and avoiding unnecessary modelling burden.

Since POA relies on monotonic structure, learned surrogates should preserve monotonicity, whether they estimate the radial inverse directly or approximate monotone constraint functions. Classical monotone interpolation techniques are computationally expensive and scale poorly with dataset size \parencite{beliakov_monotonicity}. Neural networks provide scalable alternatives: some neural architectures enforce monotonicity by construction, including \emph{Deep Lattice Networks} \parencite{deep_lattice_nets}, \emph{Min--Max Networks} \parencite{min-max_nets}, and constrained monotone architectures \parencite{constraints_mono_nets}, but these designs can be restrictive or hard to scale to high-dimensional settings. A complementary regularisation-based line of work verifies monotonicity post hoc; notably, \emph{Certified Monotone Networks} \parencite{certified_mono_nets} use MILPs to certify monotonicity of ReLU feed-forward networks. Our work follows this verification-oriented perspective, but aims increase robustness by adopting relaxed monotonicity conditions.

Building on this, we propose Homogeneous–Monotone Radial Inverse (HM-RI) networks, a structured neural architecture tailored specifically for radial inverse estimation. 
HM-RI networks enforce the structural properties required by POA, most importantly positive-homogeneity and monotonicity, while remaining efficient to train through relaxed certified monotone components. Beyond the architecture, we give a theoretical characterisation of radial inverse functions via a small set of natural properties, and show that HM-RI networks satisfy these properties under mild conditions. Consequently, the learned models are directly compatible with POA, ensuring correct integration while delivering substantial computational speed-ups in practice.

We evaluate our approaches on four classes of monotone optimisation problems (indefinite quadratic programming, multiplicative programming, and two transmit power optimisation problems), demonstrating that POA can be effectively deployed with learned components. Radial inverse estimation provides up to a 5$\times$ speed-up compared to direct estimation of monotone surrogates, while producing solutions that are often empirically near-global optima. These approaches also consistently outperform local optimisation baselines which do not leverage monotonic structure.

\subsection{Contributions}

\begin{itemize}
    \item We develop an algorithm-aware learning framework for integrating learned constraint models into the Polyblock Outer Approximation (POA) algorithm for monotone optimisation.
    
    \item We introduce a relaxation of certified monotone networks that reduces training cost and improves empirical performance, particularly for learning monotone functions with flat or piecewise-constant structure.
    
    \item We introduce radial inverse functions as a constraint representation and learning target for POA, and provide sufficient conditions under which a learned map corresponds to the radial inverse of an increasing function, ensuring compatibility with POA.

    \item We propose Homogeneous–Monotone Radial Inverse (HM-RI) networks, a structured architecture for radial inverse estimation, which removes the bisection-based feasibility search in each POA iteration.

    \item We introduce several simplified variants of HM-RI that remove architectural components while retaining strong empirical performance, clarifying which design choices matter in practice.
    
    \item We evaluate our methods on multiple classes of monotone optimisation problems and show that POA with learned components achieves accurate, often near-optimal solutions in practice, while substantially reducing optimisation runtime relative to bisection-based POA.
\end{itemize}

\section{Preliminaries}\label{sec:prelim}

\subsection{Monotone optimisation}\label{sec:feasibility}
We begin with basic notation. For $n \in \mathbb{N}$, let $[n] := \{1,\dots,n\}$. We write $\mathbb{R}_+ := [0,\infty)$. 
For vectors $x,y \in \mathbb{R}^n$, we use the standard product order: $x \le y$ (resp.\ $x < y$) if $x_i \le y_i$ (resp.\ $x_i < y_i$) for all $i \in [n]$. 
We write $x \not< y$ to denote the negation of $x<y$, i.e., there exists $i\in[n]$ such that $x_i \ge y_i$.
Given $x \le y$, define the (closed) axis-aligned box
\[
[x,y] := \prod_{i=1}^n [x_i,y_i],
\]
with open boxes defined analogously.

A function $f:\mathbb{R}_+^n \to \mathbb{R}$ is \emph{(strictly) increasing} if $f(x) \le f(y)$ (resp.\ $f(x) < f(y)$) whenever $x \le y$ (resp.\ $x < y$). 
It is \emph{(strictly) decreasing} if $-f$ is (strictly) increasing. We say that $f$ \emph{(strictly) monotone} if it is either (strictly) increasing or (strictly) decreasing.

We consider monotone optimisation problems of the form 
\begin{align}
\label{eq:mono_prob}
\begin{split}
    \max_{x \in \mathbb{R}_+^n} \quad & f(x) \\
    \text{s.t.} \quad & g_i(x) \le u_i,\quad i \in [m_g], \\
    & h_j(x) \ge l_j,\quad j \in [m_h],
\end{split}
\end{align}
where $f$, $\{g_i\}_{i=1}^{m_g}$, and $\{h_j\}_{j=1}^{m_h}$ are increasing functions on $\mathbb{R}_+^n$, and $u_1,\hdots,\,u_{m_g},\,l_1,\hdots,\,l_{m_h} \in \mathbb{R}$. 
Let
\begin{align}\label{eq:defgh}
\begin{split}
    \mathcal{G} &:= \{x \in \mathbb{R}_+^n : g_i(x) \le u_i,\ i \in [m_g]\},
\\
    \mathcal{H} &:= \{x \in \mathbb{R}_+^n : h_j(x) \ge l_j,\ j \in [m_h]\},
\end{split}
\end{align}
so that the feasible set is $\mathcal{F} := \mathcal{G} \cap \mathcal{H}$, which we assume is non-empty and that $\mathcal{G}$ is compact (e.g., by restricting $x$ to a bounded box $[0,b]\subset\mathbb{R}^n_+$).

\subsection{Polyblocks and outer approximation}\label{sec:poly_poa}

\paragraph{Rectangles and polyblocks.}
For any $x \in \mathbb{R}_+^n$, define the associated \emph{rectangle}
\[
P_x := [0,x] := \prod_{i=1}^n [0,x_i].
\]
For a set $S \subseteq \mathbb{R}_+^n$, define its induced \emph{rectangular hull}
\[
P_S := \bigcup_{s \in S} P_s.
\]
Intuitively, $P_S$ is the set of all points dominated by some $s \in S$.
A \emph{polyblock} is a rectangular hull generated by a finite set $V$, i.e.\ a set of the form $P_V$ for finite $V \subset \mathbb{R}_+^n$. 

A set $S \subset \mathbb{R}^n_+$ is said to be \emph{normal} if $P_S = S$ or, equivalently, if $x \in S$ and $0 \le y \le x$ implies $y \in S$. By monotonicity of the constraint functions, the sets $\mathcal{G}$ and $\mathcal{H}^c$ are normal.

\paragraph{Monotone projection.}
For a normal set $S \subseteq \mathbb{R}_+^n$ and $x \in \mathbb{R}_+^n$, define the \emph{monotone projection} of $x$ onto $S$ by
\[
\pi_S(x) := \alpha_S(x)\,x,\ \ \alpha_S(x) := \sup\{ \alpha \in \mathbb{R}_+ : \alpha x \in S \}.
\]
Equivalently, $\pi_S(x)$ is the largest point on the ray $\{rx : r \ge 0\}$ that lies in $S$.

\paragraph{Polyblock Outer Approximation (POA).}
The Polyblock Outer Approximation (POA) algorithm solves \eqref{eq:mono_prob} by constructing a nested sequence of polyblock supersets of the feasible region:
\[
P_{V^0} \supseteq P_{V^1} \supseteq \cdots \supseteq \mathcal{F},
\]
where each $V^t \subset \mathbb{R}_+^n$ is a finite set of vertices. 
Since $f$ is increasing, its maximum over a polyblock is attained at one of the vertices:
\begin{align}\label{eq:maximiser}
\hat{x}^t \in \arg\max_{x \in V^t} f(x)
\subset
\arg\max_{x \in P_{V^t}} f(x).
\end{align}
Thus $f(\hat{x}^t)$ provides an upper bound on the optimal value $f^\star := \max_{x \in \mathcal{F}} f(x)$. POA refines $V^t$ while maintaining this upper bound and feasible candidates obtained via projection.

POA is initialized with a dominant vertex $b \in \mathbb{R}_+^n$ such that $\mathcal{F} \subseteq P_b$, and sets $V^0 := \{b\}$.
At iteration $t\ge 0$, it computes the current maximizer $\hat{x}^t$ from \eqref{eq:maximiser} using $V^t$, and projects it onto $\mathcal{G}$ via the monotone projection
$z^t := \pi_{\mathcal{G}}(\hat{x}^t)$, defined in Section \ref{sec:poly_poa}.

When $\{g_i\}$ are known, $\pi_{\mathcal{G}}(\hat{x}^t)$ can be computed by bisection along the ray $\{r\hat{x}^t : r \ge 0\}$, see Appendix \ref{app:bisection}. 

By maximality of the projection, no point strictly larger than $z^t$ (componentwise) can lie in $\mathcal{G}$, so the region $(z^t,\infty):=\{x\in \mathbb{R}_+^n : x>z^t \}$ can be removed from the current polyblock without excluding any feasible points, yielding a refined polyblock. The corresponding vertex set $\hat V^{t+1}$ is obtained by taking $V^{t}$ and replacing each vertex $s \in V^t$ with $s > z^t$ by $n$ new vertices. Each new vertex is formed by reducing one coordinate of $s$ to match the corresponding coordinate of $z^t$, while keeping all other coordinates fixed.
Finally, since $\mathcal{H}^c$ is normal, any vertex $s \notin \mathcal{H}$ can be discarded since its corresponding rectangle $P_s \subset \mathcal{H}^c$ does not intersect $\mathcal{F}$. This yields the updated vertex set
\[
V^{t+1} := \hat V^{t+1} \cap \mathcal{H}.
\]
We then update $\hat x^{t+1}\in\arg\max_{x\in V^{t+1}} f(x)$ and iterate until a stopping criterion is reached. The construction of the vertex sets $V^0, V^1$ and $V^2$ is illustrated in Figure \ref{fig:poa}. Full details are given in Appendix~\ref{app:poa}.

\begin{figure}[t]
    \centering
    \includegraphics[width=0.8\linewidth]{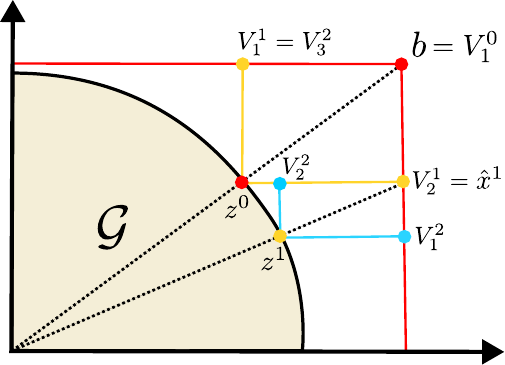}
    \caption{Illustration of the construction of $V^0=\{b\}, V^1=\{V^1_1,V^1_2\}$ and $V^2 =\{V^2_1,V^2_2, V^2_3\}$ of POA in 2D with $m_h=0$, i.e., $\mathcal H =\mathbb R^2_+$. 
    Points are color coded by iteration. Vertices are indexed by subscripts.}
    \label{fig:poa}
\end{figure}

\subsection{Learning monotone constraints}
\paragraph{Learned objective and constraints.}
We study problems of the form \eqref{eq:mono_prob} in which any subset of the functions $\{f, g_1,\dots,g_{m_g}, h_1,\dots,h_{m_h}\}$ are represented by learned surrogates, which may approximate the functions directly or some derived quantity sufficient for optimisation. Such models may be trained offline (supervised or self-supervised) or updated online during the optimisation procedure. Throughout, we assume only that the optimisation algorithm has oracle access to any learned models, i.e.\ it can evaluate them at arbitrary inputs.

While the construction applies in principle to any learned component (objective or constraints), we focus primarly on learning the upper-bound constraints $\{g_i\}_{i \in [m_g]}$, since they are queried most frequently by POA and admit additional structure exploited by our radial inverse approximation (Section~\ref{sec:rad_inv}).

\paragraph{Certified monotone networks.}
\label{ssec:mono_nets}
To preserve the monotone structure required by POA (Section \ref{sec:poly_poa}) and by the feasible-set geometry (Sections~\ref{sec:feasibility}), the learned surrogates should be monotone.
Beyond correctness, monotonicity also serves as a useful inductive bias, improving generalisation in supervised learning \parencite{constraints_mono_nets}.

We will be building on \emph{Certified Monotone Networks} \parencite{certified_mono_nets}, 
which enforce monotonicity for feed-forward ReLU networks $g_{\theta}:\mathbb{R}_+^n \to\mathbb{R}$ trained with arbitrary loss functions. During training, monotonicity is encouraged via a regulariser which penalises sufficiently small gradients:
\begin{equation} \label{eq:mono_regulariser}
    R(\theta) \;=\; -c\, \mathbb{E}_{x \sim P_{b}}\!\left[\sum\nolimits_{j\in[n]} \min\{\partial_{x_j} g_{\theta}(x), \eta \} \right],
\end{equation}
where $\eta \geq0$ is the minimum permitted gradient and $c>0$ is the regularisation strength. This expectation is estimated using points sampled uniformly from $P_b$. In practice, successful certification typically requires $\eta > 0$.

After training, monotonicity can be \emph{certified} by solving a MILP to verify that $\partial_{x_j} g_{\theta}(x)\ge 0$ for all $j\in [n]$ over the relevant domain (e.g.\ $x\in P_b$ for the initialisation of POA).
If certification fails, one increases $c$ and retrains until certification succeeds. 

For deeper networks, certification can be applied layer-wise: monotonicity is enforced for each consecutive layer pair, and the overall network remains monotone by closure under composition.

\section{Estimating projections using learned radial inverse functions}
\label{sec:rad_inv}

While POA needs direct oracle access to the objective $f$ and lower-bound constraints $\{h_i\}_{i \in [m_h]}$, it interacts with the upper-bound constraints $\{g_i\}_{i\in [m_g]}$ only through the monotone projection $\pi_{\mathcal{G}}$ (defined in Section~\ref{sec:poly_poa}). 
A direct implementation computes $\pi_{\mathcal{G}}(x)$ by bisection along the ray $\{rx:r\ge 0\}$, repeatedly evaluating the constraints. 
In our experiments, this inner bisection loop constitutes the dominant computational cost, particularly when the constraints are represented by large surrogate models.
This motivates learning the projection map itself, rather than first learning the constraint functions and then performing bisection at inference time. It also avoids bisection failures when a learned constraint model is onl
y approximately monotone.

To enable this, we introduce the \emph{radial inverse} as an explicit learning target for approximating $\pi_{\mathcal{G}}$. The radial inverse captures the ray-based scaling required for feasibility and yields closed-form projection estimates, allowing POA to be deployed with learned projection primitives.

\subsection{Radial inverse function}\label{ssec:rad}

Let $g:\mathbb{R}_+^n\to\mathbb{R}$ be an increasing function. We define its \emph{radial inverse} $\rho_g:\mathbb{R}_+^n\times\mathbb{R}\to \mathbb{R}_+\cup\{\infty\}$ as
\begin{equation}
\label{eq:rho_g}
\rho_g(x,y) := \inf\{\, r >0 : g(x/r)\le y \,\},
\end{equation}
with the convention $\inf\emptyset=\infty$, implying that $\rho_g(x,y)\in [0,\infty]$.
Let $K(y):=\{u\in\mathbb{R}_+^n: g(u)\le y\}$ denote the sub-level set of $g$ at level $y$. Then $\rho_g$ coincides with the \emph{gauge} of $K(y)$:
\begin{equation}
\label{eq:gauge_id}
\rho_g(x,y) \;=\; \gamma_{K(y)}(x),
\end{equation}
where 
$\gamma_S(x) := \inf\{\, r>0 : x\in rS \,\}$ is the gauge function for a set $S \subset \mathbb{R}^n_+$ \parencite{gauge_func_book}. 
Intuitively, $\rho_g(x,y)$ is the smallest radial scaling needed so that $x/r$ becomes feasible for the sub-level constraint $g(\cdot)\le y$.

We collect basic properties of $\rho_g$ for increasing $g$:

\begin{prop}[Radial inverse properties]
\label{prop:h_properties}
Let $g:\mathbb{R}_+^n\to\mathbb{R}$ be an increasing function, and let $x, x'\in \mathbb R^n_+$ and $y, y'\in \mathbb R$. Then $\rho_g$ satisfies:
\begin{enumerate}[label=(\roman*), leftmargin=*]
\item \textbf{(Positive-homogeneity in $x$)} For all $\alpha>0$,
$$\rho_g(\alpha x, y)=\alpha\,\rho_g(x,y).$$
\item \textbf{(Monotonicity)}  $\rho_g(x, y)$ is increasing in $x$ and decreasing in $y$: If $x\le x'$ then $\rho_g(x,y)\le \rho_g(x',y)$; if $y\le y'$ then $\rho_g(x,y')\le \rho_g(x,y)$.

\item \textbf{(Right-continuity in $y$)} For any $x$ and $y$,
\[
\lim_{\epsilon\downarrow 0}\rho_g(x,y+\epsilon)=\rho_g(x,y).
\]
\item \textbf{(Well-posedness)} For all $x \in \mathbb{R}^n_+$, there exists $y, y' \in \mathbb{R}$ with $y>y'$ such that 
        \begin{align}\label{eq:wellposed}
        \rho_g(x,y) \leq 1, \; \text{and} \;\rho_g(x,y') > 1.
        \end{align}  
\end{enumerate}
\end{prop}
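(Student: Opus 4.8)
The plan is to handle the four properties in turn, building everything on one structural observation: for fixed $x\in\mathbb{R}_+^n$, the map $r\mapsto g(x/r)$ is non-increasing on $(0,\infty)$, since $0<r\le r'$ implies $x/r\ge x/r'\ge 0$ and $g$ is increasing. Consequently, the set $A_{x,y}:=\{\,r>0:g(x/r)\le y\,\}$ is an \emph{up-set} in $(0,\infty)$: if $r\in A_{x,y}$ and $r'\ge r$ then $r'\in A_{x,y}$. Hence $A_{x,y}$ is one of $\emptyset$, $(0,\infty)$, $(\rho,\infty)$ or $[\rho,\infty)$ with $\rho=\rho_g(x,y)=\inf A_{x,y}$, so in every case $(\rho_g(x,y),\infty)\subseteq A_{x,y}\subseteq[\rho_g(x,y),\infty)$. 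This ``up-set lemma'' is the workhorse for (iii) and (iv).

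Property (i) follows from the substitution $r=\alpha s$, which gives $\{\,r>0:g(\alpha x/r)\le y\,\}=\alpha\{\,s>0:g(x/s)\le y\,\}$, and $\inf(\alpha S)=\alpha\inf S$ for $\alpha>0$ (with the usual conventions for $\emptyset$ and $\infty$). Property (ii) is a monotonicity-of-infimum argument: if $x\le x'$ then $g(x/r)\le g(x'/r)$ for every $r>0$, so $A_{x',y}\subseteq A_{x,y}$ and the infimum reverses the inclusion; if $y\le y'$ then $A_{x,y}\subseteq A_{x,y'}$, again yielding the claim. Both are one-line once the definition \eqref{eq:rho_g} is unfolded.

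For (iii), note first that by (ii) the quantity $\rho_g(x,y+\epsilon)$ is non-decreasing as $\epsilon\downarrow 0$ and is bounded above by $\rho_g(x,y)$, so $L:=\lim_{\epsilon\downarrow 0}\rho_g(x,y+\epsilon)=\sup_{\epsilon>0}\rho_g(x,y+\epsilon)$ exists and $L\le\rho_g(x,y)$. For the reverse inequality I will use the identity $A_{x,y}=\bigcap_{\epsilon>0}A_{x,y+\epsilon}$, whose only nontrivial direction is that $g(x/r)\le y+\epsilon$ for all $\epsilon>0$ forces $g(x/r)\le y$ (using that $g$ is real-valued). If $L<\rho_g(x,y)$, pick $r$ with $L<r<\rho_g(x,y)$; then $r>\rho_g(x,y+\epsilon)$ for every $\epsilon>0$, so the up-set lemma gives $r\in A_{x,y+\epsilon}$ for all $\epsilon>0$, hence $r\in A_{x,y}$ and $\rho_g(x,y)\le r$, a contradiction; this argument also covers the case $\rho_g(x,y)=\infty$. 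I expect this step — obtaining the $\ge$ direction from the intersection identity together with the up-set structure — to be the main (though still short) obstacle, the remainder being bookkeeping.

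For (iv), I will exhibit explicit levels. Taking $y:=g(x)$ gives $1\in A_{x,y}$ since $g(x/1)=g(x)\le y$, hence $\rho_g(x,y)\le 1$. Taking any $y'<g(x/2)$ forces $g(x/r)\ge g(x/2)>y'$ for every $r\in(0,2]$, because $r\le 2$ implies $x/r\ge x/2$ and $g$ is increasing; thus $A_{x,y'}\subseteq(2,\infty)$ and $\rho_g(x,y')\ge 2>1$. Finally $y'<g(x/2)\le g(x)=y$ since $x/2\le x$, so $y>y'$ as required. The degenerate case $x=0$ is handled identically with $x/2=0$, and the constant $2$ may be replaced by any $r_0>1$.
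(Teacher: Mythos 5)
Your proof is correct and follows essentially the same route as the paper: parts (i)--(ii) are the identical substitution and set-inclusion arguments, and your right-continuity step via $A_{x,y}=\bigcap_{\epsilon>0}A_{x,y+\epsilon}$ together with the up-set structure is just the radius-set formulation of the paper's argument with the nested sublevel sets $K(y)$ and the gauge identity. The only notable variation is in (iv), where the paper takes $y'=g(0)-1$ so that $K(y')=\emptyset$ and $\rho_g(x,y')=\infty$, whereas you take $y'<g(x/2)$ to force $\rho_g(x,y')\ge 2$; both witnesses are valid.
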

The proof of  Proposition \ref{prop:h_properties} is provided in Appendix \ref{app:proof_prop}.

\subsection{From radial inverse functions to monotone projections}

We first show that monotone projections onto a normal set admit a closed-form expression in terms of the radial inverses of the individual constraints.
\begin{prop}[Projection via radial inverses]\label{prop:proj_via_ri}
Let $\mathcal{G}$ be a compact normal set as defined in \eqref{eq:defgh}.
Then, for any $x \in \mathbb{R}^n_+$, the monotone projection $\pi_\mathcal{G}$ satisfies
\[
\pi_{\mathcal{G}}(x)=\left(\max_{i\in[m_g]} \rho_{g_i}(x,u_i)\right)^{-1}x,
\]
\end{prop}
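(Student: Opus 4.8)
The plan is to decompose the projection constraint-by-constraint and reduce everything to a one-dimensional reciprocal identity. Write $\mathcal{G}_i := \{x \in \mathbb{R}_+^n : g_i(x) \le u_i\}$, so that $\mathcal{G} = \bigcap_{i \in [m_g]} \mathcal{G}_i$, and recall from Section~\ref{sec:poly_poa} that $\pi_{\mathcal{G}}(x) = \alpha_{\mathcal{G}}(x)\,x$ with $\alpha_{\mathcal{G}}(x) = \sup\{\alpha \ge 0 : \alpha x \in \mathcal{G}\}$; the same formula with $\mathcal{G}_i$ in place of $\mathcal{G}$ defines $\alpha_{\mathcal{G}_i}$. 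It therefore suffices to prove $\alpha_{\mathcal{G}}(x) = \bigl(\max_{i} \rho_{g_i}(x, u_i)\bigr)^{-1}$, under the conventions $1/0 = \infty$ and $1/\infty = 0$.

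First I would establish the single-constraint identity $\rho_{g_i}(x, u_i) = \alpha_{\mathcal{G}_i}(x)^{-1}$ for each $i$ and each $x \neq 0$. The key step is the change of variable $\alpha = 1/r$, which identifies the set $\{r > 0 : g_i(x/r) \le u_i\}$ from the definition of $\rho_{g_i}$ with the set $\{\alpha > 0 : g_i(\alpha x) \le u_i\}$. Because $g_i$ is increasing, this latter set is downward closed, hence an interval of reals whose supremum is precisely $\alpha_{\mathcal{G}_i}(x)$; correspondingly, the set of admissible $r$ is an interval whose infimum is $\alpha_{\mathcal{G}_i}(x)^{-1}$, which is exactly $\rho_{g_i}(x,u_i)$. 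The two degenerate cases are dispatched by the conventions: if $g_i(\alpha x) > u_i$ for all $\alpha > 0$ then the $r$-set is empty and $\rho_{g_i}(x,u_i) = \infty = 0^{-1} = \alpha_{\mathcal{G}_i}(x)^{-1}$; if $g_i(\alpha x) \le u_i$ for all $\alpha > 0$ then the $r$-set is $(0,\infty)$ and $\rho_{g_i}(x,u_i) = 0 = \infty^{-1} = \alpha_{\mathcal{G}_i}(x)^{-1}$.

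Then I would combine the constraints. Since $\{\alpha \ge 0 : \alpha x \in \mathcal{G}\} = \bigcap_i \{\alpha \ge 0 : \alpha x \in \mathcal{G}_i\}$ is an intersection of downward-closed subsets of $\mathbb{R}_+$, its supremum is the minimum of the individual suprema, i.e.\ $\alpha_{\mathcal{G}}(x) = \min_i \alpha_{\mathcal{G}_i}(x)$. Together with the single-constraint identity this yields $\alpha_{\mathcal{G}}(x) = \min_i \rho_{g_i}(x,u_i)^{-1} = \bigl(\max_i \rho_{g_i}(x,u_i)\bigr)^{-1}$, and multiplying by $x$ gives the claimed formula. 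Finally I would note that compactness of $\mathcal{G}$ forces $\max_i \rho_{g_i}(x,u_i) > 0$ for $x \neq 0$ (otherwise $\alpha_{\mathcal{G}}(x) = \infty$ and the ray $\{\alpha x : \alpha \ge 0\}$ would meet $\mathcal{G}$ in an unbounded set), so the right-hand side is a well-defined point of $\mathbb{R}_+^n$; the cases $\max_i \rho_{g_i}(x,u_i) = \infty$ and $x = 0$ reduce to $\pi_{\mathcal{G}}(x) = 0$ once the expression is read with $\infty^{-1} = 0$ and $0 \cdot x = 0$.

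I expect the only genuine subtleties — the "hard part", such as it is — to lie in the bookkeeping of these degenerate endpoints (empty or full feasible ray, and $x = 0$) together with the fact that the $g_i$ are not assumed continuous, so the sub-level sets $\mathcal{G}_i$ need not be closed. That last point is precisely why the whole argument should be phrased in terms of suprema and infima of monotone intervals rather than in terms of whether the endpoint itself belongs to the set. The two load-bearing observations, the change of variable $\alpha = 1/r$ and the fact that the supremum of an intersection of downward-closed subsets of $\mathbb{R}_+$ equals the minimum of their suprema, are both elementary and carry the remainder of the proof.
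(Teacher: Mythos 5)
Your proposal is correct and follows essentially the same route as the paper: decompose $\mathcal{G}$ into the single-constraint sets $\mathcal{G}_i$, show $\alpha_{\mathcal{G}}(x)=\min_i \alpha_{\mathcal{G}_i}(x)$, and convert each $\alpha_{\mathcal{G}_i}(x)$ into $\rho_{g_i}(x,u_i)^{-1}$ (the paper phrases this last step via the gauge identity $\rho_{g_i}(x,u_i)=\gamma_{\mathcal{G}_i}(x)$ and $\alpha_S=\gamma_S^{-1}$, which is your change of variable $\alpha=1/r$ in disguise). Your handling of the degenerate endpoints and of non-closed sub-level sets is somewhat more explicit than the paper's, but the substance is identical.
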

For each constraint function $g_i(x)$, we propose to learn its radial inverse with a network $\phi_{\theta_i}$, i.e.,
$\rho_{g_i}(x,y)\approx \phi_{\theta_i}(x,y)$.
By Proposition~\ref{prop:proj_via_ri}, this yields the approximation
\[
\pi_{\mathcal{G}}(x)\approx \left(\max_{i\in[m_g]} \phi_{\theta_i}(x,u_i)\right)^{-1}x.
\]

Importantly, Proposition~\ref{prop:proj_via_ri} accesses each radial inverse only through its evaluation at the threshold $u_i$. As a result, a single learned surrogate $\phi_{\theta_i}(x,y)$ can be reused for any choice of $u_i$ without retraining, enabling flexible deployment across problem instances.

Approximating $\rho_{g_i}$ with a learned map $\phi_{\theta_i}$ naturally raises the question of when such a map can be interpreted as the radial inverse of a monotone function. The following result shows that the structural properties listed in Proposition~\ref{prop:h_properties} are sufficient to characterise radial inverse functions.
\begin{prop}
\label{prop:inverse}
Let $h:\mathbb{R}_+^n\times\mathbb{R}\to \mathbb{R}_+\cup\{\infty\}$ satisfy properties (i)--(iv) in Proposition~\ref{prop:h_properties} for all $(x,y)\in\mathbb{R}_+^n\times\mathbb{R}$.
Then there exists an increasing function $g:\mathbb{R}_+^n\to\mathbb{R}$ such that $h=\rho_g$.
\end{prop}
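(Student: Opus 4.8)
The plan is to construct $g$ explicitly from $h$ by inverting the relation that characterises a radial inverse, namely that $\rho_g(x,y)\le 1$ should hold exactly when $g(x)\le y$ (this inverts the gauge identity \eqref{eq:gauge_id}). Concretely, I would define
\[
g(x) \;:=\; \inf\{\, y\in\mathbb{R} : h(x,y)\le 1 \,\},\qquad x\in\mathbb{R}_+^n,
\]
and then verify in turn that $g$ is real-valued, that $g$ is increasing, and that $\rho_g=h$.

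First I would argue $g$ is well defined and finite. For fixed $x$, write $S_x:=\{y:h(x,y)\le 1\}$. Well-posedness (iv) provides some $y$ with $h(x,y)\le 1$, so $S_x\neq\emptyset$ and $g(x)<\infty$; it also provides $y'$ with $h(x,y')>1$, and since $h$ is decreasing in $y$ (property (ii)) the set $S_x$ is closed upward, so $y'$ lies strictly below every element of $S_x$ and hence $g(x)\ge y'>-\infty$. The key technical step is to show the infimum is attained, i.e.\ $h(x,g(x))\le 1$: for every $\epsilon>0$ there is $y\in S_x$ with $y<g(x)+\epsilon$, whence $h(x,g(x)+\epsilon)\le h(x,y)\le 1$ by monotonicity in $y$, and letting $\epsilon\downarrow 0$ and invoking right-continuity (iii) yields $h(x,g(x))\le 1$. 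Combined with the upward-closedness of $S_x$, this gives the equivalence
\[
h(x,y)\le 1 \quad\Longleftrightarrow\quad y\ge g(x)
\]
for all $x\in\mathbb{R}_+^n$ and $y\in\mathbb{R}$. Monotonicity of $g$ is then immediate: if $x\le x'$ then $h(x,g(x'))\le h(x',g(x'))\le 1$ by monotonicity in $x$ (property (ii)), so the equivalence forces $g(x)\le g(x')$.

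It remains to show $h=\rho_g$. Unwinding the definition, $\rho_g(x,y)=\inf\{r>0:g(x/r)\le y\}$; applying the equivalence above at the point $x/r$ rewrites the condition $g(x/r)\le y$ as $h(x/r,y)\le 1$, and positive-homogeneity (i), $h(x/r,y)=\frac{1}{r}h(x,y)$, rewrites it further as $h(x,y)\le r$. Hence $\rho_g(x,y)=\inf\{r>0:r\ge h(x,y)\}$, and a short case analysis on $h(x,y)\in[0,\infty]$ — the infimum of $(0,\infty)$ when $h(x,y)=0$, of $[h(x,y),\infty)$ when $0<h(x,y)<\infty$, and of $\emptyset$ (equal to $\infty$ by convention) when $h(x,y)=\infty$ — shows this infimum equals $h(x,y)$ in every case, so $\rho_g=h$.

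I expect the only genuinely delicate step to be the attainment claim $h(x,g(x))\le 1$: this is precisely where right-continuity (iii) enters, and without it the equivalence between $h(x,y)\le 1$ and $y\ge g(x)$ would degrade to a one-sided implication, which is not enough for the final computation of $\rho_g$. A secondary point deserving care is the boundary $x=0$, where positive-homogeneity forces $h(0,y)\in\{0,\infty\}$; one should check that the definition of $g(0)$ is still sensible there and that the convention $\alpha\cdot\infty=\infty$ implicit in property (i) is applied consistently throughout the case analysis.
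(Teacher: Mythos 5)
Your proof is correct and follows essentially the same route as the paper: you define $g(x)=\inf\{y: h(x,y)\le 1\}$ exactly as the paper does (there $K(y)=\{z:h(z,y)\le 1\}$ and $g(x)=\inf\{y: x\in K(y)\}$), use well-posedness for finiteness, right-continuity to show $h(x,y)\le 1\iff y\ge g(x)$, $x$-monotonicity for $g$ increasing, and homogeneity to identify $h$ with the gauge/radial inverse. Your version is slightly more explicit about the edge cases ($h(x,y)\in\{0,\infty\}$ and $x=0$), but the substance is identical.
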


Proposition \ref{prop:inverse} motivates our architectural choices: by enforcing properties (i)--(iv), we restrict the learned model $\phi$ to the class of valid radial inverses.
Consequently, the learned projection primitive remains compatible with POA even without explicitly recovering the underlying constraint functions. Proofs of Propositions \ref{prop:proj_via_ri} and \ref{prop:inverse} are provided in Appendix \ref{app:proof_inverse}.

\section{Structured models for radial inverse estimation} \label{sec:hm_nets}

In this section, we introduce the HM-RI neural network architecture $\phi_\theta$, designed to estimate the radial inverse used by POA while satisfying properties (i)--(iv) in Proposition~\ref{prop:h_properties}. 
We begin by describing the relaxed certified monotonicity method that underpins our approach. We then present the HM-RI architecture and training details, before introducing simplified architectures.

\subsection{Relaxed certified monotone networks}
\label{ssec:relaxed_mnets} 

While certified monotone networks provide exact monotonicity guarantees, they can be challenging to deploy in practice. Since small positive gradients are penalised ($\eta > 0$ in \eqref{eq:mono_regulariser}), it is difficulty to learn functions which are not \emph{strictly} monotone. Moreover, the network $g_\theta$ often converges to a state that is monotone over almost all of $P_b$, with violations confined to very small ReLU affine regions. In such cases, certification may fail even though the regulariser \eqref{eq:mono_regulariser} is effectively inactive in practice, as sampled gradients almost surely exceed $\eta$. To address this, introduce \emph{relaxed certified networks}, which trade exact monotonicity for improved optimisation and empirical performance.

Our relaxation has two components.
First, to handle the problem of small affine regions, we ignore regions below a minimum size $\tau$ when enforcing monotonicity. This substantially reduces the cost of certification by omitting tiny regions that have negligible impact on overall monotone behaviour but are difficult to constrain through regularisation.
Second, to address the difficulty of learning functions which are not strictly monotone everywhere, we allow a small controlled violation of monotonicity by permitting gradients to be slightly negative up to a fixed tolerance, i.e. $\partial_{x_j} g_{\theta}(x) \ge -\delta$ for a user-specified tolerance $\delta\ge 0$. This allows us to set $\eta = 0$, which improves estimation of functions with flat or gently increasing regions. 

Although these relaxations remove formal monotonicity guarantees, the models largely retain the desired monotone behaviour in practice and consistently lead to improved test performance in our experiments. Exact definitions and implementation details are provided in Appendix~\ref{apen:mono_nets}. We also include ablation experiments which demonstrate the benefits of each relaxation in Appendix~\ref{apen:mono_relax}.

\subsection{Homogeneous-monotone radial inverse (HM-RI) networks}\label{ssec:hm-ri}

The aim of this section is to design a neural architecture which satisfies the structural properties in Proposition~\ref{prop:h_properties}. Continuity (iii) holds for feed-forward networks with continuous activations. We therefore focus on enforcing positive homogeneity (i) and monotonicity (ii), and on partially promoting the well-posedness requirement (iv) via a our training objective.

We construct a scalar-valued model $\phi_{\theta_i}(x,y)$ that is monotone in $y$, and both monotone and positive-homogeneous in $x$. Specifically, we define
\[
\phi_{\theta_i}(x,y) = \tanh\!\big(\sigma_{\theta_i}(y)\big)^{\!\top}\, \psi_{\theta_i}(x),
\]
where $\sigma_{\theta_i} : \mathbb{R} \to \mathbb{R}^w$ and $\psi_{\theta_i} : \mathbb{R}_+^n \to \mathbb{R}^w$ are feed-forward networks and $\tanh(\cdot)$ is applied element-wise.

If $\sigma_{\theta_i}$ and $\psi_{\theta_i}$ are element-wise monotone and $\psi_{\theta_i}$ is positive-homogeneous, then $\phi_{\theta_i}$ inherits the desired monotonicity and homogeneity properties. 
To achieve monotonicity, we train both subnetworks as relaxed certified monotone networks. 
To enforce positive-homogeneity, we construct $\psi_{\theta_i}$ using bias-free linear layers with ReLU activations, which are themselves positively homogeneous. Such bias-free networks are universal approximators of positive-homogeneous functions \parencite{bias-free_nets_universal_approx}.

Finally, regarding well-posedness (iv), we observe that the condition is typically satisfied in practice. Even when it fails, a relaxed form of Proposition~\ref{prop:inverse} can still be established; see Appendix~\ref{app:proof_inverse}.

\subsubsection{Training objective}\label{ssec:training}
The training procedure for the radial inverse surrogate $\phi_{\theta_i}$ depends on the application. For our experiments, we consider a supervised setting in which $\phi_{\theta_i}$ is trained on a dataset $\mathcal{D}_i = \{(x_j, y_j)\}_j$ satisfying $g_i(x_j) = y_j$.

For each training pair $(x,y)$, the definition of the radial inverse implies $\rho_{g_i}(x,y)\le 1$, since taking $r=1$ already reaches level $y$ along the ray. We therefore train $\phi_{\theta_i}$ to produce values close to $1$ on the training data, while penalising overestimation more strongly in order to encourage the per-sample feasibility constraint $\phi_{\theta_i}(x,y)\le 1$. This also helps enforce the upper-bound condition as required by the well-posedness property. We achieve this using an asymmetric squared loss, 
\begin{equation} \label{eq:loss_func}
L(\theta_i) = \sum_{(x,y) \in \mathcal{D}_i}
\Bigl(\mathcal{E}_{\theta_i}(x,y)^2
+
\beta \bigl[\mathcal{E}_{\theta_i}(x,y)^+\bigr]^2\Bigr),
\end{equation}
where $\mathcal{E}_{\theta_i}(x,y) = \phi_{\theta_i}(x,y) - 1$ and $(\cdot)^+ = \max(\cdot,0)$.
When $\phi_{\theta_i}(x,y)\le 1$, this reduces to a standard least-squares loss. For $\phi_{\theta_i}(x,y)>1$, the squared error is amplified by a factor of $1+\beta$, placing additional penalty on violations of the constraint $\phi_{\theta_i}(x,y)\le 1$.

When $g_i$ is strictly monotone, the radial inverse satisfies $\rho_{g_i}(x, y) = 1$ for all samples, and the standard least-squares loss (corresponding to $\beta = 0$) is sufficient.

\subsection{Simplified HM-RI models}\label{ssec:simplified}
In addition to HM-RI, we consider simplified variants obtained by relaxing one or both of its structural properties. Monotonicity is removed by replacing $\sigma_\theta$ and $\psi_\theta$ with standard feed-forward networks, while positive homogeneity is relaxed by allowing bias terms in $\psi_\theta$, eliminating the architectural guarantee of homogeneity. This yields three variants: H-RI (homogeneous only), M-RI (monotone only), and RI (neither property).

We find that variants without an architectural homogeneity guarantee do not yield meaningful projection estimates when trained using \eqref{eq:loss_func}. To mitigate this, we encourage homogeneity through data augmentation. Specifically, for each $(x,y)\in\mathcal{D}$ we sample $\alpha$ uniformly from $[0.5,2.5]$ and add the input pair $(\alpha x, y)$ with target value $\alpha$ to an augmented dataset. The error term in \eqref{eq:loss_func} is then replaced by
$\mathcal{E}_{\theta_i}(\alpha x,y) = \phi_{\theta_i}(\alpha x,y) - \alpha$.
Similar augmentation strategies have proven effective in other domains, such as computer vision \parencite{data_augmentation}.

\section{Application problems} \label{sec:applications}

We evaluate our approach on four families of monotone optimisation problems: indefinite quadratic programming, multiplicative programming, and two classes of transmit power optimisation problems.
Across all experiments, the objective function is assumed known and the constraint set is learned from data.
(Our relaxed certified monotone networks could also be applied to the objective or any lower-bound constraints, but we do not consider this setting here.)

Rather than learning a fixed finite collection of constraints $\{g_i\}_{i\in[m_g]}$, we learn parameterised families
$\{g_z\}_{z\in\mathbb{R}^k}$ by treating $z$ as an additional network input. Here, $z$ collects the instance parameters (e.g., coefficients for quadratic and multiplicative constraints, or user locations for power control), so a single conditional model represents a continuum of constraints. We train neural approximations of either $g_z$ directly, or of its radial inverse,
using randomly generated samples $(x,y,z)$ satisfying $g_z(x)=y$.
This allows us to evaluate performance over diverse, randomly generated monotone optimisation instances,
rather than a single fixed problem.
We describe each family of problems and its corresponding constraint parametrisation below.

\subsection{Indefinite quadratic programming} \label{ssec:quad_prog}

We consider problems of the form
\begin{align}\label{eq:qp}
\begin{split}
    \max_{x\in [0,1]}& \quad x^T Q_0 x, \\
    \text{s.t.}& \quad x^T Q_j x + c_j^Tx \leq u_j,\; j \in [m_g]
\end{split}
\end{align}
where $Q_j\in\mathbb{R}_+^{n\times n}$, $c_j\in\mathbb{R}_+^n$, and $u_j\in\mathbb{R}_+$.
The nonnegativity assumption ensures monotonicity; however, general polynomial problems can be transformed into monotone form
\parencite{monotonic_opt_tuy2}. Non-convex quadratic programs form a widely studied subclass of such problems \parencite{nonconvex_quad_heuristics}. For the constraints in \eqref{eq:qp}, the radial inverse is tractable since it reduces to solving a quadratic equation.

\subsection{Multiplicative programming} \label{ssec:multi_prog}
We consider problems of the form 
\begin{align*}
    \max_{x \in [0, 1]}& \quad x^T Q_0 x , \\
    \text{s.t.}&\ \quad \prod^k_{i=1} \left( x^T Q_{i,j}x + c_{i,j} \right) \leq u_j,\; j \in [m_g]
\end{align*}
where $Q_{i,j}\in\mathbb{R}_+^{n\times n}$, $c_{i,j}\in\mathbb{R}_+$, and $u_j\in\mathbb{R}_+$.
Monotone optimisation has previously been applied successfully to related multiplicative programmes
\parencite{monotone_multi_opt}.
Unlike the quadratic case, the radial inverse of these constraints is not available in closed form in general.

\subsection{Transmit power optimisation} \label{ssec:tpower_optim}

Monotone optimisation arises naturally in wireless systems, as channel capacity is increasing in system resources such as transmit power, bandwidth, and antenna counts \parencite{comms_load_balancing_survey}. We consider the problem of minimising total transmit power subject to user-specific capacity requirements in a cellular downlink setting:
\begin{align} \label{eq:tpower}
    \min_{p \in [0, 1]}\; \sum_{i=1}^n p_i,\ \ \text{s.t.}\; C(p, s_j) \geq c_j, \; j\in [m_g],
\end{align}
where $p_i$ denotes the transmit power of the $i$-th radio transmitter, $s_j\in\mathbb{R}^3$ is the location of user $j$, and $C(p, s_j)$ denotes the channel capacity of user $j$.
Since $C(p,s_j)$ is non-decreasing in $p$, \eqref{eq:tpower} can be written in the canonical monotone form
\eqref{eq:mono_prob} via the substitution $y = 1 - p \in[0,1]$:
\begin{align} \label{eq:tpower_canon}
    \begin{split}
        \max_{y\in [0,1]}\: \sum_{i=1}^n y_i,  \ \text{s.t.}\ -C(1 - y, s_j) \leq -c_j, \, j \in [m_g].
    \end{split}
\end{align}
If $y^\star$ solves \eqref{eq:tpower_canon}, then $p^\star=1-y^\star$ solves \eqref{eq:tpower}.

Power optimisation typically relies on an accurate model of the capacity function $C(p,s_j)$.
In modern networks, however, such models are difficult to obtain due to the increasing complexity of network dynamics and control mechanisms. Simplifying assumptions are typically required to produce tractable problems (see, e.g., \parencite{backpressure_book}).

Instead, we learn $C(p,s_j)$ from data, which may be obtained from system measurements or simulation.
In our experiments we use the Cellular Radio Reference Model (CRRM) simulator \parencite{keith-CRRM}, which simulates radio links between a set of users and cell towers.
Here, $p_i$ is the transmit power of the $i$-th cell tower and $C(p,s_j)$ denotes the achievable capacity under a dynamically selected modulation and coding scheme for a user at $s_j$.
Notably, the resulting capacity function $C(p,s_j)$ is piecewise constant in $p$ and $s_j$, making local, gradient-based optimisation challenging.

We evaluate two variants of \eqref{eq:tpower_canon} corresponding to different signal path-loss models:
(i) a power-law path-loss model, and (ii) the Urban Macrocell (UMa) model, which is more realistic and
standards-compliant for urban deployments \parencite{3gpp_channel_model}.

\begin{table*}[ht]
    \centering
        \centering
        \captionsetup{justification=centering}
        \caption{
        Average projected objective (P-Obj.; higher is better) for each method, application, and data regime. Limited models are trained on a fixed set of 512 training samples while Unlimited models are trained using a stream of randomly generated samples (i.e., on-the-fly sampling rather than a fixed finite dataset).
        Results are averaged over 4000 randomly generated problem instances and five training seeds; uncertainties are small and omitted.
        \textsc{Oracle} uses ground-truth constraints and is included only as a reference (not directly comparable). 
        N/A denotes settings where the method consistently failed to converge or returned invalid/degenerate solutions, so P-Obj. is not meaningful.
        }
        \begin{tabular}[t]{l l|C{1.3cm}  C{1.1cm} | C{1.3cm} C{1.1cm}  | C{1.3cm} C{1.1cm} | C{1.3cm} C{1.1cm}  } \toprule
        \multirow{2}*{Type} &
        \multirow{2}*{Method} &
        \multicolumn{2}{c|}{Quadratic} &
        \multicolumn{2}{c|}{Multiplicative} & 
        \multicolumn{2}{c|}{Power-law} &
        \multicolumn{2}{c}{UMa} \\ 
        &  & Unlimited & Limited &
        Unlimited & Limited  &
        Unlimited & Limited  &
        Unlimited & Limited  
        \\ \midrule

        \multirow{5}*{Monotone} &
        HM-RI  & 0.071 & \textbf{0.067} & 
        0.145 & \textbf{0.145} &
        -1.20 &  -1.45&
        -1.34 &  \textbf{-1.40} \\
        
        &M-RI  &  0.071 & 0.065& 
        0.145 & 0.138  & 
        -1.14 &  \textbf{-1.42}&
        -1.32 &  -1.45  \\

        &H-RI  &  0.078 & 0.056 & 
        \textbf{0.151} & 0.111 &
        \textbf{-0.95} &  -1.63 &
        \textbf{-1.02} &  -1.68  \\

        &RI  & \textbf{0.081} & 0.059 & 
        0.144 & 0.117 &
        -1.03 &  -1.65 &
        -1.09 &  -1.69  \\
        
        &M-Net  & 0.059 & N/A &
        0.139 & N/A &
        -1.16 &  N/A &
        -1.29 & N/A \\
        
        \midrule
        \multirow{2}*{Local} &
        SLSQP  &  0.074 & N/A& 
        0.142 & N/A& 
        -1.55 &  N/A &
        -1.62 &  N/A \\
        
        &COBYLA&  0.074 & N/A & 
        0.145 & N/A & 
        -1.75 &  N/A&
        -1.70 &  N/A  \\
        \midrule
        \textsc{Oracle} &  & 0.085 & 0.085 &
        0.163 & 0.163 & 
        -0.57 & -0.57&
        -0.75 &  -0.75 \\
        \bottomrule
    \end{tabular} 
    \label{tab:opt}
\end{table*}

\section{Numerical results} \label{sec:results}

We evaluate the POA algorithm on the tasks described in Section~\ref{sec:applications}, comparing performance under different learned representations of the monotone projection operator $\pi_{\mathcal{G}}$.
Specifically, we consider: (i) learning the constraint directly using relaxed certified monotone networks from Section~\ref{ssec:relaxed_mnets} (M-Net) and computing projections using these learned constraints;
(ii) learning radial-inverse surrogates using HM-RI from Section~\ref{ssec:hm-ri} and H-RI, M-RI, and RI from Section~\ref{ssec:simplified};
and (iii) an oracle projection computed by bisection on the ground-truth constraint functions (\textsc{Oracle}).
As additional baselines, we include Sequential Least Squares Programming (SLSQP) \parencite{slsqp} and Constrained Optimization BY Linear Approximation (COBYLA) \parencite{Cobyla} from the \texttt{scipy.optimize} package \parencite{scipy}. 
These are local constrained solvers that do not exploit monotonicity. We use them in a ``learn-then-optimise'' pipeline: we first fit constraint surrogates $g_{\theta_i}$ with non-monotone feed-forward networks, and then solve the resulting surrogate constrained problem with SLSQP/COBYLA. This tests whether exploiting monotonicity via POA and projection provides benefits beyond generic constrained optimisation on learned models.
All non-oracle methods have access only to sampled constraint evaluations, whereas \textsc{Oracle} assumes full access to the true constraint functions.

We set $n=4$ for all problems, $m_g=8$ for the quadratic and transmit power tasks, and $m_g=2$ with $k=8$ for multiplicative programming.
Table~\ref{tab:opt} reports the \emph{projected objective} value
$f(\pi(x^\star))$ for the returned solution $x^\star$.
Since all applications only impose lower-bound constraints (in addition to the box constraint $x\in[0,1]$),
the projection $\pi(x^\star)$ is always feasible.
Thus, $f(\pi(x^\star))$ corresponds to the best feasible objective value along the ray $\{r x^\star : r>0\}$,
providing a unified measure of solution quality across methods.
We consider two data regimes: Limited, where models are trained on a fixed set of 512 samples, and Unlimited, where models are trained using an endless stream of randomly generated samples. In the limited regime, we report results only for radial inverse methods, as the other baselines were generally unreliable and often failed to produce meaningful solutions.
For completeness, we report the unprojected objective values $f(x^\star)$, the total constraint violation
$\sum_i [g_i(x^\star)]^+$, and algorithm runtime in Appendix~\ref{apen:results}.

\paragraph{Accuracy.} Table~\ref{tab:opt} shows that radial-inverse (RI) surrogates consistently achieve higher projected objective values than both M-Net and local baselines. This shows the advantage of leveraging monotonicity and supports the view that, within POA, learning the radial inverse is a more effective way to approximate the projection operator than learning the constraint function directly. In particular, RI models avoid the bisection search required by constraint-based projections, which is a major computational bottleneck in POA iterations. Compared to local optimisation methods, monotone approaches yield the largest gains on the transmit power problems, where piecewise-constant capacity constraints create large flat regions that are challenging for gradient-based solvers.

\paragraph{Generalisation under limited data.}

Comparing the two data regimes, we find that structured models, and monotone models in particular, are most beneficial when training data is scarce. As the dataset size decreases, explicitly enforcing monotonicity provides a strong inductive bias that improves generalisation. With abundant data, highly expressive unconstrained networks trained on monotone targets tend to exhibit approximately monotone behaviour even without explicit constraints, reducing the relative advantage of structured models. In this regime, the additional regularisation in \eqref{eq:mono_regulariser} can degrade performance by diverting optimisation from the original training target.

\paragraph{Runtime.} Table~\ref{tab:runtime_small} reports runtime comparisons for a representative subset of methods in the unlimited-data regime. By eliminating the need for bisection, RI-based approaches are consistently faster than M-Net and, in some cases, even faster than \textsc{ORACLE}. Although RI remains slower than local search methods, this gap largely reflects the overhead of our current POA implementation and could likely be reduced with more optimised variants (see Appendix~\ref{apen:poa_implement}).

\begin{table}[h] 
    \centering
        \centering
        \captionsetup{justification=centering}
        \caption{Mean runtime in seconds for selected methods on each application problem in the unlimited data regime.
        Uncertainties are small and omitted. \label{tab:lim_data}}
        \begin{tabular}[t]{l| c c c c } \toprule
        Method & Quad. & Mult. & Power-law & UMa  \\ \midrule
        RI & 0.91 & 1.85 & 0.31 & 0.39 \\
        M-Net & 2.4 & 6.7 & 0.75 & 0.54 \\
        SLSQP & 0.027 & 0.019 & 0.10 & 0.43 \\
        COBYLA & 0.017 & 0.016& 0.03 & 0.04 \\
        \textsc{ORACLE} & 2.2 & 4.4 & 0.02 & 0.06\\
        \bottomrule
    \end{tabular}
    \label{tab:runtime_small}
\end{table}

\section{Conclusion}
We presented a learning framework for integrating learned models into the POA algorithm for monotone optimisation. 
We argued that, within POA, it is more effective to learn the radial inverse---a core primitive for projection onto normal sets---than to approximate constraint functions directly. To enable this, we introduced the HM-RI architecture and provided sufficient structural conditions under which the learned map corresponds to the radial inverse of an increasing constraint, ensuring compatibility with POA.

Across four application settings, our experiments show that monotone optimisation with learned components achieves strong solution quality and consistently outperforms local constrained solvers on the evaluated tasks. Moreover, learning the radial inverse improves both runtime and projected objective relative to constraint-function surrogates, highlighting the value of aligning learned representations with the structure of the underlying optimisation algorithm. These results suggest radial-inverse learning as a promising direction for data-driven monotone optimisation.

\printbibliography

\newpage

\appendix

\section{Proofs}

\subsection{Proof of radial inverse properties}\label{app:proof_prop}

\begin{proof}[Proof of Proposition \ref{prop:h_properties}]
We demonstrate the four properties in order:
\begin{itemize}[leftmargin=0pt]
    \item[] (\textbf{Positive-homogenous in $x$}) - For any $\alpha > 0$, $x\in \mathbb R^n_+$ and $y\in \mathbb R$, we have by the definition of $\rho_g$ in \eqref{eq:rho_g}:
    \begin{align*}
    \rho_g(\alpha\,x,y) &= \inf \{r>0 : g(\alpha x / r) \leq y \} \\
    &= \inf \{r>0 : g( x / (r/\alpha) ) \leq y \} \\
    &= \alpha\, \rho_g(x,y).
    \end{align*}

    \item[] (\textbf{Monotonicity}) - We show monotonicity in $x$ first. Let $x,x' \in \mathbb R^n_+$ with $x' \geq x$. Since $g$ is increasing, we have $g(x' / r) \geq g(x/r)$ for any $r>0$. Then,
    \begin{align*}
        \{r>0 : g(x' / r) \leq y \} \subset  \{r>0 : g(x / r) \leq y \}
    \end{align*}
    for any $y \in \mathbb{R}$. It follows that 
    
    \begin{align*}
    \rho_g(x',y) &= \inf \{r>0 : g(x' / r) \leq y \} \\&\geq  \inf \{r>0 : g(x / r) \leq y \}= \rho_g(x,y). 
    \end{align*}
    
    The monotonicity in $y$ follows in a similar way: For $y, y'\in \mathbb R$ with $y \leq y'$ and all $x \in\mathbb R^n_+$, we have
    \begin{align*}
        \{r>0 : g(x / r) \leq y \} \subset  \{r>0 : g(x / r) \leq y' \},
    \end{align*}
    therefore 
    
    \begin{align*}
       \rho_g(x,y) &= \inf \{r>0 : g(x / r) \leq y \} \\&\geq \inf \{r>0 : g(x / r) \leq y' \} = \rho_g(x,y'). 
    \end{align*}

    \item[] (\textbf{Right-continuity in $y$}) - Fix $x \in \mathbb{R}^n_+$, $y \in \mathbb{R}$, and let $r = \rho_g(x,y)$. First, assume that $r\in[0,\infty)$.
    By the definition of $\rho_g$, for any $\delta > 0$ we have $ x\notin (r - \delta)K(y).$ Further, the sublevel sets satisfy 
    \begin{align}\label{eq:k_cap}
    K(y) = \bigcap_{\epsilon > 0} K(y + \epsilon).
    \end{align}
    Hence, for any $\delta>0$ there exists some $\epsilon > 0$ depending on $\delta$ such that for all $\epsilon' \in [0, \epsilon)$, we have $$ x\notin (r - \delta)K(y + \epsilon').$$ Using \eqref{eq:gauge_id}, we then compute 
    \begin{align*}
        \rho_g(x, y + \epsilon') &= \gamma_{K(y+\epsilon')}(x) \\ 
        &=\inf \{ r>0 : x\in r K(y+\epsilon')\}\\
        &\geq  r - \delta.
    \end{align*}
    As $r= \rho_g(x,y)$, the monotonicity of $\rho_g$  in $y$ implies $\rho_g(x, y + \epsilon')\leq r$, and hence $$\rho_g(x, y + \epsilon') \in [r - \delta, r].$$ Since $\delta$ is arbitrary, this implies that continuity holds.
    
    If $r=\infty$, then for any $M>0$ we have $x\notin M K(y)$, and by \eqref{eq:k_cap} $x\notin M K(y+\epsilon)$ holds for all sufficiently small $\epsilon>0$, implying $\rho_g(x,y+\epsilon)\ge M$; since $M$ is arbitrary, $\rho_g(x,y+\epsilon)\to\infty$ as $\epsilon\to 0$.

    \item[] (\textbf{Well-posedness}) - For all $x \in \mathbb{R}^n_+$, we have $x \in K(g(x))$ by the definition of $K$ in Section \ref{ssec:rad}. Therefore by \eqref{eq:gauge_id} 
    \begin{align*}
    \rho_g(x, g(x)) = \inf\{r > 0: x \in rK(g(x)) \} \le 1.
    \end{align*}
    Next, we show the lower-bound identity. By monotonicity, we observe that $g(0) \le g(x)$ for all $x \in \mathbb{R}^n_+$ and hence there is no $x\in \mathbb R^n_+$ such that $g(x) \leq g(0)-1$, implying $$K(g(0) - 1) = \emptyset.$$  Therefore, $$\rho_g(x, g(0) - 1) = \inf \{r > 0: x\in r\, \emptyset \} = \infty, $$
    implying $\rho_g(x, g(0) - 1)> 1.$ We set $y=g(x)$ and $y'=g(0)-1$, and it follows that $y>y'$ and \eqref{eq:wellposed} holds.
    \end{itemize}

\end{proof}

\subsection{Proof of projections as radial inverse functions}\label{app:proof_inverse}
In this section, we provide the proofs of Proposition \ref{prop:proj_via_ri} and Proposition \ref{prop:inverse}.

\begin{proof}[Proof of Proposition \ref{prop:proj_via_ri}]

Recall the definition (Section \ref{sec:poly_poa}) of monotone projection onto a normal set:
$$\pi_{\mathcal{G}}(x)=\alpha_\mathcal{G}(x)\, x,\ \ \alpha_\mathcal{G}(x) =\sup\{\alpha> 0:\ \alpha x\in\mathcal{G}\},$$ 
and the gauge function:
$$\gamma_S(x) = \inf\{r>0:\ x\in r\, S\}, \ \ S \subset \mathbb{R}^n_+.$$
If we accept $0^{-1} = \infty$, we have the identity $\alpha_{S}(x)= (\gamma_{S}(x))^{-1}$, for any set $S \subset \mathbb{R}^n_+$. However, note that by the compactness of $\mathcal{G}$, it follows that $\gamma_\mathcal{G}(x) > 0$ for all $x \in \mathbb{R}^n_+$ -- since otherwise $\{rx: r>0\} \subset \mathcal{G}$. This ensures the division-by-zero case does not occur with $\mathcal{G}$.

For the normal set $\mathcal{G}$, we compute
\begin{align*}
    \alpha_\mathcal{G}(x) &= \sup\{\alpha > 0: g_i(\alpha x) \leq u_i, \, i \in [m_g] \} \\
    &= \sup\{\alpha > 0: \max\nolimits_{i \in [m_g]} (g_i(\alpha x) - u_i) \leq 0 \} \\ 
    &= \min_{i \in [m_g]} \left( \sup\{\alpha > 0: g_i(\alpha x) - u_i \leq 0 \} \right) \\
    &= \min_{i \in [m_g]} \alpha_{\mathcal{G}_i}(x),
\end{align*}
where 
\begin{align*}
        \mathcal{G}_i = \{x \in \mathbb{R}_+^n : g_i(x) \le u_i\}.
\end{align*}
The radial inverse satisfies $\rho_{g_i}(x,u_i)=\gamma_{\mathcal G_i}(x)$. This yields:
\begin{align*}
    \pi_\mathcal{G}(x) &=\alpha_{\mathcal G}(x)x\\
    &= \left( \min\nolimits_{i \in [m_g]}\alpha_{\mathcal{G}_i}(x) \right)\,x \\
    &= \left( \min\nolimits_{i \in [m_g]}[\gamma_{\mathcal{G}_i}(x)]^{-1} \right)\,x \\
    &= \left( \min\nolimits_{i \in [m_g]}[\rho_{g_i}(x,u_i)]^{-1} \right)\,x \\
    &= \Bigl(\max\nolimits_{i\in[m_g]} \rho_{g_i}(x,u_i)\Bigr)^{-1}\,x.
\end{align*}
\end{proof}

\begin{proof}[Proof of Proposition \ref{prop:inverse}]
First, we show that positive-homogeneity implies that $h$ is a gauge function: 
\begin{align*}
    h(x, y) &= \inf \{r > 0 : h(x,y) \leq r \} \\
    &= \inf \{r > 0 : h\left(\frac{x}{r},y\right) \leq 1 \} \\ 
    & = \inf \{r > 0 : x \in rK(y) \}  \\
    &= \gamma_{K(y)}(x)
\end{align*}
where $$K(y) = \{z\in \mathbb{R}^n_+: h(z,y) \leq 1 \}.$$ By $y-$monotonicity of $h$, these sets are nested -- meaning that for $y' \geq y$ we have $$K(y) \subset K(y').$$

We then show that $K(y)$ are the sub-level sets of a function $g$, which we define as:

$$g(x) :=  \inf \{y \in \mathbb{R}: x \in K(y)\}.$$

The well-posedness property of $h$ ensures that, for each $x \in \mathbb{R}^n_+$, there exists $y_1, y_2 \in \mathbb{R}^n_+$ such that $$x \in K(y_1),\; \text{and}\; x\notin K(y_2).$$ This, along with the fact that the sets $K(y)$ are nested, implies that $\{y \in \mathbb{R}: x \in K(y)\}$ is non-empty and bounded from below for all $x$, ensuring that $g$ is always finite-valued. Using the right-continuity of $h$, we show that $K(y)$ are precisely the sub-level sets of $g$
\begin{align*}
    g(x) \leq y &\iff x \in \bigcap\nolimits_{\epsilon>0}K(y + \epsilon) \\
    & \iff h(x, y + \epsilon) \leq 1,\; \forall \epsilon > 0 \\
    &\iff h(x, y) \leq 1 \\
    &\iff x \in K(y).
\end{align*}

It remains to show that $g$ is increasing, which follows from $x$-monotonicity of $h$. Let $x' \geq x$, then 
\begin{align*}
    1 \geq h(x', g(x')) \geq h(x, g(x'))
\end{align*}
therefore $x \in K(g(x'))$ and $g(x) \leq g(x').$
\end{proof}
\paragraph{Well-posedness assumption.}
We note that the well-posedness assumption is used solely to ensure that the resulting function 
$g$ has finite values. The remainder of the argument does not rely on this property. If we instead accept $\inf\,\mathbb{R} = -\infty$ and permit $g$ to take values in $\{-\infty, \infty\}$, the assumption can be dropped. Under this interpretation, even when $\phi$ fails to satisfy well-posedness, it still corresponds to a valid radial inverse.

\section{Polyblock Outer Approximation (POA) details}\label{app:poa}

\subsection{Bisection search}\label{app:bisection}
In POA, we only compute monotone projections for $x \notin \mathcal{G}$, where we have $r_\mathcal{G}(x) \in [0, 1)$. Therefore, for computing projections, it is sufficient check where the line segment  $\{rx: r \in [0,1)\}$ intersects the boundary of $\mathcal{G}$. This is done by performing a line-search to find zeros of the monotone function $g'(x):= \max_i (g_i(x) - u_i)$. We include psudocode in Algorithm~\ref{alg:bisection}. If constraint function $g_i$ is strictly monotone, then it is possible to replace the \emph{while} condition with $\|g'(rx)\| > \epsilon$, and return $r$ instead.

\subsection{Vertex refinement}
The polyblocks $V^t$ are refined by cutting out the cone $(z^t, \infty)$ at each iteration. The remaining set is a polyblock defined by the vertices

\begin{equation*}
    \hat{V}^{t+1} = (V^{t}/(z^t, \infty)) \cup W^t,
\end{equation*}
where
$$W^t = \{ s - (s_i - z^t_i)\,e_i : s \in V^t,\, s > z^t,\, i\in [n] \},$$
and $e_i$ is the $i$'th basis unit vector. It can be seen that $$P_{\hat{V}^{t+1}} = P_{V^t} / (z_t, \infty).$$
Since the $\mathcal{H}^c = P_{\mathcal{H}^c}$, we can also remove infeasible vertices from $\hat{V}^{t+1}$ removing any elements from $\mathcal{F}$:
\begin{equation} \label{eq:vertex_update}
    V^{t+1} = \hat{V}^{t+1} \cap \mathcal{H}.
\end{equation}

\begin{algorithm}
    \caption{The Polyblock Outer Approximation algorithm}
    \label{alg:poa}
    \begin{algorithmic}
        \STATE \hspace{-3mm}\textbf{Inputs: $f,\, \pi_\mathcal{G},\, \mathcal{H},\, b,\, \epsilon$}
        \STATE \hspace{-3mm}\textbf{Output: $\hat{z}$}

        \STATE $V \gets \{b\}$
        \STATE $\hat{x} \gets b$
        \STATE $f_\text{best} \gets -\infty$
        \WHILE{$f_\text{best} + \epsilon < f(x^*)$}
        \STATE $z \gets \pi_\mathcal{G}(\hat{x})$
        \IF{$f(z) > f_\text{best}$ and $z \in \mathcal{H}$}
        \STATE $\hat{z} \gets z$
        \STATE $f_\text{best} \gets f(z)$
        \ENDIF
        \STATE $V \gets$ Update as \eqref{eq:vertex_update}
        \STATE $\hat{x} \gets \argmax_{x\in V} f(x)$
        \ENDWHILE 
    \end{algorithmic}
\end{algorithm}

\subsection{Convergence}

This vertex refinement process continues until the stopping condition
$$\max_{s \leq t,\, z^s \in \mathcal{H}} f(z^s) + \epsilon \geq \max_{x \in V^t} f(x)$$
is met for a given tolerance $\epsilon > 0$. If the algorithm terminates in finitely many iterations, then $z^s$ is $\epsilon-$optimal. Otherwise, if the algorithm is infinite, it can still be shown to converge to an optimal solution under mild conditions.

\begin{theorem} \label{thm:POA}
    Assume that $f$ is upper-continuous and $\mathcal{G} \cap \mathcal{H} \subset \mathbb{R}^n_{++}$. If Algorithm \ref{alg:poa} is infinite, then the sequences $\{\hat{x}^t\},\, \{z^t\}$ contain subsequences converging to a solution of $\eqref{eq:mono_prob}$.
\end{theorem}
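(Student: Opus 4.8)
The plan is to extract a convergent subsequence from the bounded sequence $\{\hat x^t\}$ using compactness, and then argue that its limit is feasible and optimal by exploiting two structural facts: (i) the upper bound $f(\hat x^t)$ is non-increasing in $t$ and always dominates $f^\star$, so it converges to some value $v \ge f^\star$; and (ii) the projected points $z^t = \pi_{\mathcal G}(\hat x^t)$ are feasible for $\mathcal G$ and, along a suitable subsequence, cannot lag arbitrarily far behind $\hat x^t$. The key tension the refinement process resolves is that once a vertex $\hat x^t$ is processed, the cone $(z^t,\infty)$ is cut out, so any later vertex must lie outside that cone — forcing later maximisers to move ``closer'' to the feasible set in the radial sense.

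The steps I would carry out, in order:
\begin{enumerate}[label=(\arabic*), leftmargin=*]
\item All vertices lie in $P_b$, which is compact, so pass to a subsequence (not relabelled) along which $\hat x^t \to \bar x \in P_b$. Since $f$ is upper semi-continuous and $f(\hat x^t)$ is a non-increasing sequence bounded below by $f^\star$, we get $f(\hat x^t)\downarrow v$ for some $v\ge f^\star$, and $f(\bar x)\ge \limsup_t f(\hat x^t) = v$ by upper semi-continuity — wait, upper semi-continuity gives $f(\bar x) \ge \limsup$, which is what we want here.
\item Show $\bar x \in \mathcal G$. Consider the scalars $\alpha^t := \alpha_{\mathcal G}(\hat x^t)\in[0,1)$ (recall $\hat x^t \notin \mathcal G$, or if some $\hat x^t \in \mathcal G$ we are done immediately since then $z^t = \hat x^t$ is feasible with $f(z^t) \ge v$). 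Along a further subsequence $\alpha^t \to \bar\alpha \in [0,1]$, so $z^t = \alpha^t \hat x^t \to \bar\alpha\,\bar x \in \mathcal G$ by closedness of $\mathcal G$. The crux is to prove $\bar\alpha = 1$. Suppose $\bar\alpha < 1$. For $s < t$ in the subsequence, $\hat x^t \notin (z^s,\infty)$, i.e. there is a coordinate $i$ with $\hat x^t_i \le z^s_i = \alpha^s \hat x^s_i$. Using $\hat x^t \to \bar x$, $\hat x^s \to \bar x$, and $\alpha^s \to \bar\alpha < 1$, this would force $\bar x_i \le \bar\alpha\,\bar x_i$, hence $\bar x_i = 0$ (for some coordinate depending on the pair, but by pigeonhole one coordinate $i$ recurs infinitely often). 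This contradicts $\mathcal G\cap\mathcal H\subseteq\mathbb R^n_{++}$ once we also know $\bar x\in\mathcal H$ — so these two arguments must be interleaved carefully.
\item Show $\bar x\in\mathcal H$. Each $\hat x^t\in V^t\subseteq\mathcal H$ by the vertex-update rule \eqref{eq:vertex_update}, and $\mathcal H$ is closed (as $h_j$ are increasing, the superlevel sets are closed — strictly this needs the $h_j$ continuous, or we invoke that $\mathcal H^c$ normal and we only need closedness of $\mathcal H$, which I would assume or note follows from the standing hypotheses). Hence $\bar x\in\mathcal H$.
\item Combine: $\bar x\in\mathcal H$, and from step 2 the only obstruction to $\bar\alpha=1$ forced a zero coordinate of $\bar x$; but $\bar x\in\mathbb R^n_{++}$ is not yet known. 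Resolve this by noting that if $\bar\alpha < 1$ then the limit point $\bar x \in \mathcal H$ has $\bar x_i = 0$ for some $i$, while also $\bar\alpha\bar x \in \mathcal G$; one then checks $\bar\alpha\bar x$ or a nearby point lies in $\mathcal G\cap\mathcal H$ with a zero coordinate — actually the cleaner route is: $\bar\alpha\bar x\in\mathcal G$ and $\bar\alpha\bar x\le\bar x$ with $\mathcal H$ normal complement meaning $\mathcal H$ is upward-closed, so $\bar x\in\mathcal H \Rightarrow$ nothing about $\bar\alpha\bar x$; instead use that $z^t\in\mathcal H$ holds for the *accepted* projections and $f(z^t)\to$ best value. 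I would argue: along the subsequence where $z^t\in\mathcal H$ (if this happens infinitely often, which the stopping-rule failure guarantees in the limit), $z^t\to\bar\alpha\bar x\in\mathcal G\cap\mathcal H\subseteq\mathbb R^n_{++}$, so $\bar\alpha\bar x$ has all positive coordinates, in particular $\bar x_i>0$ for all $i$, contradicting step 2's conclusion. Hence $\bar\alpha=1$, giving $\bar x\in\mathcal G$, so $\bar x\in\mathcal F$ and $f(\bar x)\ge v\ge f^\star$, forcing $f(\bar x)=f^\star$ and $\bar x$ optimal; then also $z^t\to\bar x$, so both sequences have the desired convergent subsequences.
\end{enumerate}

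The main obstacle is step 2 together with step 4: rigorously showing $\bar\alpha = 1$. This is where all the hypotheses ($f$ upper semi-continuous, the interiority condition $\mathcal G\cap\mathcal H\subseteq\mathbb R^n_{++}$, and the cutting geometry) must be combined, and one has to handle the bookkeeping of nested subsequences (first for $\hat x^t$, then for $\alpha^t$, then restricting to indices where $z^t\in\mathcal H$) and the pigeonhole over which coordinate $i$ realises $\hat x^t\notin(z^s,\infty)$. A delicate point is whether the accepted-projection events $\{t: z^t\in\mathcal H\}$ are infinite; if not, one must instead run the argument purely on $\hat x^t$ and use that the cones $(z^t,\infty)$ cut at *every* iteration regardless of acceptance, so the geometric contradiction in step 2 still applies. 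I would present the cleaner ``cones cut at every iteration'' version as the backbone, using acceptance only at the very end to identify the limit of $z^t$ as optimal.
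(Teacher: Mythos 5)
The first thing to note is that the paper does not prove Theorem~\ref{thm:POA}: it is quoted from the monotone-optimisation literature, and the text explicitly defers to \parencite{monotone_opt_tuy} for the proof. There is therefore no in-paper argument to compare against, and your sketch has to stand on its own. Its skeleton is the standard one for this result --- compactness of $P_b$, the non-increasing upper bounds $f(\hat{x}^t)\ge f^\star$ from the nested polyblocks, upper semi-continuity at the limit point, and the cone-cutting/pigeonhole argument showing that $\bar{\alpha}<1$ would force a vanishing coordinate of $\bar{x}$ --- and your steps (1)--(3) are essentially sound modulo the bookkeeping you flag.

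The genuine gap is in step (4), which is the decisive step. To contradict $\mathcal{G}\cap\mathcal{H}\subset\mathbb{R}^n_{++}$ you must exhibit a single point of $\mathcal{G}\cap\mathcal{H}$ with a zero coordinate, and you never do: you have $\bar{x}\in\mathcal{H}$ (from $V^t\subset\mathcal{H}$ and closedness) but no reason for $\bar{x}\in\mathcal{G}$, and you have $\bar{\alpha}\bar{x}\in\mathcal{G}$ (from closedness of the compact set $\mathcal{G}$) but no reason for $\bar{\alpha}\bar{x}\in\mathcal{H}$; upward-closedness of $\mathcal{H}$ runs the wrong way here, since $\bar{\alpha}\bar{x}\le\bar{x}$. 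Your patch is to restrict to indices with $z^t\in\mathcal{H}$, but the parenthetical claim that stopping-rule failure guarantees this happens infinitely often does not hold: if $z^t\notin\mathcal{H}$ for every $t$, then $f_{\mathrm{best}}=-\infty$ throughout and the while-condition in Algorithm~\ref{alg:poa} never triggers, so an infinite run is perfectly consistent with the accepted-projection set being empty. Even if that set were infinite, it need not be cofinal in the particular nested subsequence along which the pigeonhole argument produced the vanishing coordinate, so the two subsequence extractions do not obviously compose. Closing this requires an additional idea --- this is where the cited treatment does real work, e.g.\ exploiting that $\mathcal{F}$ is compact and contained in $\mathbb{R}^n_{++}$, hence $\mathcal{F}\subset[a,b]$ for some $a>0$, together with a finer analysis of which vertices can survive both the cuts and the $\mathcal{H}$-pruning. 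As written, $\bar{\alpha}=1$ is not established, and with it both the feasibility of $\bar{x}$ and the identification $z^t\to\bar{x}$ fall. A secondary, smaller issue you correctly flag yourself: closedness of $\mathcal{H}$ needs upper semi-continuity of the $h_j$, which is not among the stated hypotheses.
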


See \parencite{monotone_opt_tuy} for a proof and a more extensive discussion of the algorithm. 

\subsection{Origin shift} \label{apen:origin_shift}
One of the assumptions of Theorem \ref{thm:POA} is that feasible set doesn't intersect the coordinate axis. We show that this assumption is not restrictive and can always be achieved by a variable transformation. If this does not hold for a given problem, it possible to shift the origin as follows. For $\alpha > 0$, define 
\begin{align*}
    f'(x) &:= f(x - \alpha), \\
    \mathcal{G}' &:= (\alpha + \mathcal{G} - \mathbb{R}^n_+) \cap \mathbb{R}^n_+ ,\\
    \mathcal{H}' &:= \alpha + \mathcal{H}.
\end{align*}
It can be seen that the problem
\begin{equation} \label{eq:origin_shift}
    \max_x f'(x), \ \text{s.t.}\ x \in \mathcal{G}'\cap \mathcal{H}'
\end{equation}
is monotone, satisfies the required assumption, and is equivalent to the original problem in the sense that if $y^*$ solves \eqref{eq:origin_shift}, then $x^* = y^* - \alpha$ solves the original problem.

\subsection{Vertex reduction}
The size of the vertex sets $V^t$ tends to grow exponentially with $t$, which can cause memory issues. We describe some methods to reduce their size.

Given the current best solution $$\hat{z}^t = \argmax_{s <  t,\, x_s \in \mathcal{H}} f(z^t),$$
it is sufficient to search only for solutions better than $\hat{z}^t$. Therefore, we can remove all $x \in \hat{V}^{t+1}$ for which $f(x) + \epsilon < f(\hat{z}^t)$. Moreover, there are some some redundant vertices $x \in \hat{V}^{t+1}$ which are dominated by some other vertex $y \geq x$ with $y \in \hat{V}^{t+1}$. To avoid this, we compute the set $W^t$ of added vertices in such a way that no new vertex lies in the rectangular set of any of the removed vertices $V^t / (z^t, \infty)$. More specifically we take
$$W^t = \{ s^{(i)} \notin V^t / \{s\}: s \in V^t,\, s > z^t ,\,i\in [n]\},$$
where $s^{(i)} =  s - (s_i - z^t_i)e_i.$
This removes all redundant vertices form  $V^t$.

To place a hard limit on the vertex-set size, we can restart the algorithm when a vertex limit $V_\text{max}$ is exceeded. Specifically, if $|V^{t}| > V_\text{max}$, we set $V^{t} \gets \{b\}$ after computing $z^t$ and before the update in \eqref{eq:vertex_update}.

\section{Certified monotone networks} \label{apen:mono_nets}
In this appendix, we describe the monotone certification method and our two proposed relaxations in more detail.

\subsection{Verifying monotonicity}
The original certified monotone method relies on verifying the monotonicity of two-layer networks with ReLU activations by formulating the gradients of such networks as a system of mixed-integer equations. More specifically, we define a two-layer network with a scaler output
\begin{equation*}
    g(x) = \sum_{j \in [w]} a_j(w^T_jx + b_j)^+,
\end{equation*}
where the final bias is omitted as it does not affect monotonicity. The gradient of $g$ is
\begin{equation*}
    \nabla g(x) = \sum_{j \in [w]} z_j\,\alpha_j\,w_j,
\end{equation*}
where each $z_j = \mathbbm{1}_{w^T_j x + b_j \geq 0}$ is the activation state of neuron $j$. Assuming the input is bounded, which holds in our application with $x \in P_b$, there exists $l_j, u_j \in \mathbb{R}$ such that the first affine layer is bounded as $$l_j \leq w^T_jx + b_j \leq u_j, \quad j\in [w]$$
The activations $z_j$ may then be written as a set of linear constraints:
$$z_j \in S_j(x),$$
where $$S_j(x):=\{z' \in \{0, 1 \}: l_j (1-z') \leq w^T_jx + b_j \leq u_jz'\},$$
it can be seen that $|S_j(x)| \equiv 1.$ The network $g_\theta$ is monotone iff
\begin{equation} \label{eq:montone_check}
    \nabla g(x)_i \geq 0,\; \text{ for all } x \in P_a, \, i\in [n].
\end{equation}
To confirm this, we solve the MILP:
\begin{align} \label{eq:MILP}
    \min_{\substack{x \in P_b \\ v \in \mathbb{R}^n_+}} \ & \sum_j (z_j \alpha_j w_j)^T v, \ \
    \text{s.t. } \ z_j \in S_j(x),\; ||v||_1 =  1,
\end{align}
which yields the minimum value of $\nabla g(x)_i$ for checking \eqref{eq:montone_check}.

In practice, we do not need to solve the MILP exactly. To speed up the verification process, it is sufficient to satisfy at least one of two stopping conditions: either finding a counter-example to \eqref{eq:montone_check}, or a non-negative bound lower on the objective $\nabla g(x)_i$. Branch-and-bound algorithms used for solving MILPs can naturally reach either condition before solving \eqref{eq:MILP} exactly.

We note that this method can be expanded to enforce monotonicity on only a subset of components of $x$.

\subsection{$\delta-$relaxation}
The $\delta-$relaxation described in Section~\ref{ssec:relaxed_mnets} is implemented by simply replacing condition \eqref{eq:montone_check} by 
\begin{equation} \label{eq:relaxed_check}
    \nabla g(x)_i \geq \delta,\; \text{ for all } x \in P_a, \, i\in [n],
\end{equation}
where we set $\delta = -0.1$ in our experiments. The MILP stopping conditions are adjusted accordingly.

\subsection{$\tau-$relaxation}
The $\tau-$relaxation enforces monotonicity on only larger affine pieces of $g$. More specifically, we enforce \eqref{eq:relaxed_check} only for $x \in P_b$ for which the neuron activations $z$ are fixed at all perturbed points $x + v$ for all $v \in \mathbb{R}^n$ with $\|v\|_\infty \leq \tau$. To verift this condition, it is sufficient to replace $S_j$ in \eqref{eq:MILP} with the set 
\begin{align*}
    S'_j(x)
    = \left\{ z' \in \{0,1\} \;:\;
    \begin{aligned}
    w^T_j x + b_j + \tau'_j &\le u_j z',\\
    w^T_j x + b_j - \tau'_j &\ge l_j (1 - z')
    \end{aligned}
    \right\},
\end{align*}

where $$\tau'_j = \max \{ w^T_j v :\|v\|_\infty \leq \tau \} = \tau\,\|w_j\|_1.$$
In our experiments, we set $\tau = 0.01.$

\section{Experimental details}

\subsection{Hardware}
All experiments were conducted on a single machine with an NVIDIA GeForce RTX 4090 GPU and an AMD Ryzen 9 7950X3D CPU.

\subsection{POA implementation}\label{apen:poa_implement}
For the POA algorithm, We used a numerical tolerance of $\epsilon=$1e-3 and a maximum vertex size $V_\text{max}=10,000$. Where applicable, we used a tolerance of $\epsilon=$1e-4 for the bisection search.

We used a simple implementation of POA in Python, prioritising clarity over efficiency. To simplify the implementation, both $\hat{x}^t$ and the vertex set $V^t$ were computed using operations on unstructured arrays. Since the vertex set is updated iteratively, more efficient data structures could be employed, such as maintaining sorted arrays of vertex objective values and coordinate values. This would allow $\hat{x}^t$ and $V^t$ to be updated more efficiently and would uniformly reduce the runtimes reported in Table~\ref{tab:opt}.
\begin{algorithm}
    \caption{The bisection search algorithm used for computing monotone projections}
    \label{alg:bisection}
    \begin{algorithmic}
        \STATE \hspace{-3mm}\textbf{Inputs: $\{g_i\}_{i \in [m_g]},\, x,\, \epsilon,$}
        \STATE \hspace{-3mm}\textbf{Output: $r_\text{min}\, x$}
        \STATE $(r_{\text{max}},\, r_{\text{min}},\, r) \gets (1,\, 0,\, 0.5)$
        \STATE $g'(x) := \max_i (g_i(x) - u_i)$
        \WHILE{$r_\text{max} - r_\text{min} > \epsilon$}
        \IF{$g'(rx) > 0$}
        \STATE $r_{\text{max}} \gets r$
        \ELSE
        \STATE $r_{\text{min}} \gets r$
        \ENDIF
        \STATE $r \gets (r_{\text{max}} + r_{\text{min}}) / 2$
        \ENDWHILE 
    \end{algorithmic}
\end{algorithm}

\subsection{MILP solver}
We used the paid version of Gurobi to solve the certification MILP in \eqref{eq:MILP}. We set the \textbf{MIPFOCUS} parameter to 2 (targets optimality), and the \textbf{CUTS} parameter to 3 (very aggressive). We used 32 threads and set a 5 minute time limit was set for the solver, after which the model was flagged as uncertified.

\subsection{Model design}
All radial inverse models considered had an identical architecture for each application considered. $\sigma_\theta$ and $\psi_\theta$ had six layers (three sets of two-layer monotone networks) with 64 neurons each. M-Net and the feed-forward models used for testing SLSQP had six layers, alternating with 100 and 64 neurons. Both  types of model had an approximately equal number of learned parameters.  For the relaxed monotonicity experiments in Appendix~\ref{apen:mono_relax}, models sizes were reduced to only two layers.

\begin{table*}[ht]
    \centering
        \centering
        \captionsetup{justification=centering}
        \caption{Average unprojected objective (Obj.) $f(x^\star)$, total constraint violation (Viol.) $\sum_i [g_i(x^\star)]^+$, and runtime (in seconds) for each method and application in both the limited and unlimited data scenarios. All tasks are maximisation problems, so higher objective values are preferable, while lower violation indicates better feasibility, resulting in a trade off between objective and violation.
        Results are averaged over 4000 randomly generated problem instances and five training seeds (uncertainties are small and omitted).}
        \begin{tabular}[t]{l l|c c c| c c  c| c c c| c c c} \toprule

        \multirow{2}*{Data} &
        \multirow{2}*{Method} &
        \multicolumn{3}{c|}{Quadratic} &
        \multicolumn{3}{c|}{Multiplicative} & 
        \multicolumn{3}{c|}{Power-law} & 
        \multicolumn{3}{c}{UMa} \\ 
        & & Obj. & Viol. & Time &
        Obj. & Viol. & Time &
        Obj. & Viol. & Time &
        Obj. & Viol. & Time \\ \midrule 

        \multirow{4}*{L} &
        HM-RI & 0.141 & 0.21 & 1.38&
        0.253 & 0.116 & 1.92&
        -1.68 &  0.27 &  1.98&
        -1.52 &  0.31 &  1.29\\
        
        &M-RI & 0.198 & 0.43 & 1.76&
        0.323 & 0.218 & 2.08&
        -1.63 &  0.25 &  0.53&
        -1.59 &  0.27 &  0.39   \\
        
        &H-RI & 0.213 & 0.66 & 1.96&
        0.281 & 0.253 & 2.25&
        -1.45 &  0.88 &  0.06&
        -1.46 &  1.49 &  0.06  \\
        
        &RI & 0.179 & 0.41 & 1.95&
        0.275 & 0.217 & 2.25&
        -1.52 &  0.86 &  0.07&
        -1.61 &  0.81 &  0.09  \\ \midrule

        \multirow{7}*{U} &
        HM-RI &  0.096 & 0.059 & 1.13 & 
        0.176 & 0.043 & 1.61 &
        -0.90 &  0.84 & 2.15&
        -1.28 & 0.60 & 2.10\\
        
        &M-RI  &  0.094 & 0.051 & 0.90&
        0.142 & 0.016 & 1.15 &
        -1.23 &  0.30 & 0.42&
        -1.24 & 0.46 & 0.18  \\
        
        &H-RI  &  0.104 & 0.044 & 1.15&
        0.228 & 0.073 & 1.97&
        -0.93 &  0.34& 0.56 &
        -1.08 & 0.24 & 0.34 \\
        
        &RI  & 0.098 & 0.031 & 0.91&
        0.225 & 0.081  & 1.85&
        -1.22 &  0.12 & 0.31&
        -1.30 & 0.12 & 0.39 \\
        
        &M-Net  &0.052 & 0.011 & 2.4&
        0.129 & 0.043 & 6.7 &
        -0.28 &  1.56& 0.75 &
        -1.74 & 0.20& 0.54 \\
        
        &SLSQP  &  0.087 & 0.060 & 0.027 &
        0.164 & 0.054 & 0.019&
        -0.90 &  3.03& 0.10&
        -1.72 & 0.30&0.43 \\
        
        &COBYLA &  0.083 & 0.020 & 0.017&
        0.161 & 0.049 & 0.016 & 
        -0.28 &  1.56& 0.03 & 
        -1.39 & 1.67 & 0.04 \\
        \bottomrule
    \end{tabular} 
    \label{tab:opt_app}
\end{table*}

\subsection{Training}
All models were trained using Adam with a learning rate of 2e-4 and a batch size of 512. Monotone networks where trained for an initial 4000 iterations, then 1000 iterations for each training loop reset before model certification (as described in Appendix~\ref{apen:mono_nets}). The regularisation factor was initialized at $c=0.05$ and increased by a factor of 1.2 (1.5 for the transmit power problem) after each reset, up to a maximum of $0.2$ (or 1.0, respectively).

Models with no monotone networks were trained for $20\,000$ iterations. This choice was made to balance the training time for each type of model.

\subsection{Problem generation}
For each class of problems considered in Section~\ref{sec:applications}, we reported results on sets of 4000 randomly generated problem instances. For the problems in Sections~\ref{ssec:quad_prog} and \ref{ssec:multi_prog}, elements of the linear and quadratic parameters, $c_i$ and $Q_i$, were sampled uniformly from $[0,1]$. For the communication problem in Section~\ref{ssec:tpower_optim}, instances were generated automatically using the CRRM simulator \parencite{keith-CRRM}. In all cases, the constraint targets $u_i$ were obtained by evaluating each constraint at a randomly chosen point $x \in [0, 0.5]$, ensuring that all generated constraints are feasible.

\section{Additional results}
This appendix provides additional quantitative results complementing the main experiments in Section~\ref{sec:results}.

\subsection{Unprojected objective and constraint violation}\label{apen:results}

Table~\ref{tab:opt_app} reports the objective value and constraint violation of the \emph{unprojected} solutions returned by each method, along with its algorithm runtimes.

Recall that our algorithms return a candidate solution $x^\star\in[0,1]$, which may not be feasible under the true constraints.
To compare methods in a consistent way, the main paper evaluates solutions using the \emph{projected objective}
$f(\pi_{\mathcal{G}}(x^\star))$, which corresponds to the best feasible point obtained by projecting $x^\star$ along its ray.
In contrast, Table~\ref{tab:opt_app} reports the raw (unprojected) objective $f(x^\star)$ together with the total constraint violation
$\mathrm{viol}(x^\star) := \sum_{i=1}^{m_g} [g_i(x^\star)]^+$.
Lower violation indicates greater feasibility, but a method may achieve a higher objective only by violating constraints more strongly,
so these numbers should be interpreted jointly.

Overall, in the unlimted data regime, the radial-inverse (RI) methods achieve high unprojected objective values. Among them, H-RI attains the largest objective on all tasks except the power-law setting, where HM-RI is marginally higher.
Compared to the local baselines, H-RI yields smaller constraint violation on two of the four tasks (power-law and UMa), while achieving competitive objectives.

For the power-law task, H-RI attains a lower unprojected objective than SLSQP/COBYLA, but its constraint violation is substantially smaller (0.34 vs.\ 3.03 and 1.56, respectively). After projection, this leads to the best projected objective (Table~\ref{tab:opt}).
Conversely, for the multiplicative problem, H-RI incurs higher violation than the local baselines (0.073 vs.\ 0.054/0.049) but achieves a markedly larger objective (0.228 vs.\ 0.164/0.161), again yielding the strongest projected objective after projection (Table~\ref{tab:opt}).

As expected, solution quality is uniformly reduced in the limited data regime. Monotone radial inverse networks (HM-RI and M-RI) are seen to have the highest quality solutions in this regime, as indicated by their projected objectives in Table~\ref{tab:opt}.

\begin{table}[ht] 
    \centering
        \centering
        \captionsetup{justification=centering}
        \lk{\caption{Mean test loss and number of training-loop restarts for monotonicity relaxation ablations on the UMa task, averaged over five training seeds.}\label{tab:mono_relax}}
        \begin{tabular}[t]{c c | c c } \toprule 
        $\delta$ & $\tau$ & Loss & Restarts \\ \midrule
        -0.1 & 0.01 & $0.103 \pm 0.004$ & $13.0 \pm 0.4$ \\
        0.0 & 0.01 & $0.5 \pm 0.1$ & $23 \pm 1$  \\
        -0.1 & 0.0 & $0.080 \pm 0.002$ & $29 \pm 3 $  \\
        0.0 & 0.0 & $0.70 \pm 0.06$ & $26 \pm 1$ \\
        \bottomrule
    \end{tabular} 
\end{table}

\subsection{Relaxed monotonicity results}\label{apen:mono_relax}

To assess the two monotonicity relaxations from Section~\ref{ssec:relaxed_mnets}, we report test losses and the number of training-loop restarts when learning channel capacity models for the UMa task (Section~\ref{ssec:tpower_optim}). We ablate each relaxation by setting either (or both) of $\delta$ and $\tau$ to zero. As shown in Table~\ref{tab:mono_relax}, setting $\tau>0$ consistently reduces the number of restarts across both values of $\delta$, improving training stability. In contrast, $\delta<0$ primarily improves predictive performance, reducing test loss by nearly an order of magnitude. Overall, $\delta$ provides the main accuracy gains, whereas $\tau$ improves optimisation robustness under strict monotonicity constraints. The differences in test loss between configurations with and without $\tau$ at $\delta=-0.1$ are largely explained by restart overhead (and hence differences in effective training time), rather than a detrimental effect of $\tau$, which is consistent with the remaining ablations.

\end{document}